    \def\@@and{et}
\newcommand{\delete}[1]{}
\newcommand{\on}[1]{\operatorname{#1}}
\title[M\'Ethode de quadrature pour les PINNs fond\'Ee sur la hessienne des r\'Esiduels]{M\'Ethode de quadrature pour les PINNs fond\'Ee th\'Eoriquement sur la hessienne des r\'Esiduels}
\author[A. CARADOT]{Antoine CARADOT$^1$}
\author[R. EMONET]{R\'emi Emonet$^1$}
\author[A. HABRARD]{Amaury Habrard$^1$}
\author[A.R. Mezidi]{Abdel-Rahim Mezidi$^1$}
\author[M. SEBBAN]{Marc Sebban$^1$}
\address{$^1$Laboratoire Hubert Curien, UniversitŽ Jean Monnet, Saint-\'Etienne, France}
\date{24 juin 2025}
\keywords{PINN ; Points de collocation ; ƒchantillonnage adaptatif ; MŽthode de quadrature.}
\thanks{2020 {\it Mathematics Subject Classification:} Primary 68T07, 41A55; Secondary 62D05, 35A25}
\theoremstyle{plain}
\newtheorem{theorem}{ThŽorme}[section]
\newtheorem{lemma}{Lemme}[section]
\newtheorem{proposition}{Proposition}[section]
\theoremstyle{remark}
\newenvironment{abstracts}{%
  \ifx\maketitle\relax
    \ClassWarning{\@classname}{Abstract should precede
      \protect\maketitle\space in AMS document classes; reported}%
  \fi
  \global\setbox\abstractbox=\vtop \bgroup
    \normalfont\Small
    \list{}{\labelwidth\z@
      \leftmargin3pc \rightmargin\leftmargin
      \listparindent\normalparindent \itemindent\z@
      \parsep\z@ \@plus\p@
      
      \itemsep\medskipamount
    }%
}{%
  \endlist\egroup
  \ifx\@setabstract\relax \@setabstracta \fi
}
\newcommand{\abstractin}[1]{%
  \otherlanguage{#1}%
  \item[\hskip\labelsep\scshape\abstractname.]%
}
\begin{document}



\begin{abstracts}
\abstractin{french}
Les rŽseaux de neurones informŽs par la physique (PINNs) sont apparus comme un moyen efficace d'apprendre des solveurs d'EDP en incorporant le modle physique dans la fonction de perte et en minimisant ses rŽsiduels par diffŽrenciation automatique ˆ des points dits de collocation. Originellement sŽlectionnŽs de manire uniforme, le choix de ces derniers a fait l'objet d'avancŽes rŽcentes en Žchantillonnage adaptatif. Nous proposons ici une nouvelle mŽthode de quadrature pour l'approximation d'intŽgrale basŽe sur la hessienne de la fonction considŽrŽe, pour laquelle nous dŽrivons une borne d'erreur d'approximation. Nous exploitons cette information de second ordre pour guider la sŽlection des points de collocation durant l'apprentissage des PINNs.


\abstractin{english}
Physics-informed Neural Networks (PINNs) have emerged as an efficient way to learn surrogate neural solvers of PDEs by embedding the physical model in the loss function and minimizing its residuals using automatic differentiation at so-called collocation points. Originally uniformly sampled, the choice of the latter has been the subject of recent advances leading to adaptive sampling refinements. In this paper, we propose a new quadrature method for approximating definite integrals based on the hessian of the considered function, and that we leverage to guide  the selection of the collocation points during the training process of PINNs.
\end{abstracts}
\maketitle


\section{Introduction}

\delete{
Incorporating domain knowledge into machine learning algorithms has become a widespread strategy for managing ill-posed problems, data scarcity and solution consistency. Indeed, ignoring the fundamental principles of the underlying theory may lead to, yet optimal, implausible solutions yielding poor generalization and predictions with a high level of uncertainty. Embedding domain knowledge has been shown to be useful  when used at different levels of the learning process for (i) constraining/regularizing the optimization problem, (ii) designing suitable theory-guided loss functions, (iii) initializing  models with meaningful parameters, (iv) designing consistent neural network  architectures, or (v) building (theory/data)-driven hybrid models. In this context, physics is probably  the scientific domain that has benefited the most during the past years from  advances in the so-called {\it Physics-informed Machine Learning} (PiML) field \cite{karniadakis_physics-informed_2021} by leveraging physical laws, typically in the form of Partial Differential Equations (PDEs) that govern some underlying dynamical system. This new line of research led to a novel generation of deep-learning architectures, including Neural ODE \cite{chen2019neural}, PINN \cite{raissi2019physics}, FNO \cite{LiKALBSA21}, PINO \cite{li2023physicsinformed},  PDE-Net \cite{long2018pdenet}, etc.
}

MalgrŽ des avancŽes scientifiques importantes en simulation numŽrique, rŽsoudre efficacement des EDPs demeure un problme complexe et cožteux. En incorporant l'Žquation dans la fonction de perte et en minimisant ses rŽsiduels en des points de {\it collocation}, les rŽseaux de neurones informŽs par la physique (PINNs) \cite{raissi2019physics} sont apparus comme une solution sŽduisante pour apprendre efficacement des solveurs neuronaux. MalgrŽ leur efficacitŽ, les PINNs sont encore mal compris, et il est crucial d'Žtudier leurs fondements thŽoriques ainsi que leurs propriŽtŽs algorithmiques afin d'avoir une bonne comprŽhension de leurs capacitŽs et  limites. En effet, des Žtudes rŽcentes ont montrŽ que les PINNs peuvent tre sujets ˆ des comportements pathologiques, conduisant ˆ des rŽsiduels nuls, donc plausibles du point de vue physique, mais correspondant ˆ des solutions incorrectes \cite{Chandrajit2021,doumeche2023convergence}. 
La caractŽrisation de ces « modes d'Žchec » 
a menŽ ˆ une recherche active abordant la question sous deux angles diffŽrents : un premier thŽorique visant ˆ Žtablir dans un contexte d'Žchantillonnage uniforme des points de collocation (e.g., grille uniforme Žquidistante ou tirage alŽatoire uniforme), des garanties de consistance et de convergence sous forme de bornes d'estimation, ou d'approximation (e.g.,  \cite{doumeche2023convergence,girault:hal-04518335,deryck2023error}) ; un deuxime amŽliorant l'Žchantillonnage des points de collocation. Plut™t que de les tirer de manire uniforme, plusieurs stratŽgies  ont ŽmergŽ dans la littŽrature, suggŽrant d'orienter la sŽlection des points au cours de l'apprentissage en fonction de l'amplitude ou du gradient des rŽsiduels de l'EDP. Ceci a donnŽ lieu ˆ une nouvelle famille de mŽthodes d'Žchantillonnage adaptatif pour les PINNs (e.g., \cite{pmlr-v202-daw23a,subramanian2023,Chenxi2022,gPINN2021}). 
Cependant, il convient de noter que, bien que ces mŽthodes aient dŽmontrŽ des performances remarquables en pratique, elles ont en commun l'absence de garanties thŽoriques quant ˆ leur avantage par rapport ˆ un Žchantillonnage uniforme. \\
L'objectif de ce papier est de rŽpondre ˆ cette limitation. Partant du fait que la minimisation d'une perte empirique en apprentissage  peut tre abordŽe d'un point de vue mathŽmatique ˆ travers l'approximation de l'intŽgrale d'une fonction $f$, nous proposons une nouvelle rgle de quadrature simple basŽe sur la hessienne de $f$. Nous dŽrivons une borne supŽrieure sur l'erreur d'approximation  et montrons sa meilleure prŽcision par rapport ˆ celle issue d'une grille uniforme rŽgulire. Ce rŽsultat thŽorique nous conduit ˆ concevoir une  mŽthode d'Žchantillonnage adaptatif pour les PINNs, o $f$ prend la forme de la fonction de perte basŽe sur les rŽsiduels. Cette stratŽgie sŽlectionne les points de collocation dans le domaine spatio-temporel aux endroits o la hessienne varie le plus. Les expŽriences menŽes sur deux EDPs en 2D mettent en Žvidence les propriŽtŽs intŽressantes de notre mŽthode.

\delete{
The objective of this paper is to bridge the gap by providing two new methodological contributions: (i) Recalling that minimizing an empirical loss in machine learning can be approached from a mathematical perspective as the approximation of the integral of some function $f$, we propose a new quadrature rule based on a simple trapezoidal interpolation and information about the second-order derivative $f{''}$. We derive an upper bound on the approximation error and show its tightness compared to that of issued from an equispaced uniform grid. This theoretical result is supported by several experiments. (ii) This finding prompts us to design a new theoretically founded adaptive sampling method for PINNs where $f$ takes the form of the residual-based loss function. This  strategy 
selects collocation points in the spatio-temporal domain where $f{''}$ varies the most. Experiments conducted on 1D and 2D PDEs highlight the interesting properties of our method.

The rest of this paper is organized as follows: in Section~\ref{sec:BG}, we introduce the  necessary background and related work; Section~\ref{sec:quadrature} is devoted to the presentation of our refined quadrature method and the upper bound derived on the total approximation error. In Section~\ref{sec:PINN}, we leverage our quadrature method to propose a new adaptive sampling method for PINNs and test it on 1D and 2D PDEs.
}

\section{Notations} \label{sec:BG}
On considre les EDPs de la forme $\frac{\partial u}{\partial t}+{\mathcal N[}u]=0,$ 
o ${\mathcal N[}\cdot]$ est un opŽrateur diffŽrentiel en temps et en espace, et o  $u(t,\mathbf{x})$ est la solution avec $t \in [0,T]$ et  $\mathbf{x} \in \Omega$. Cette Žquation est gŽnŽralement accompagnŽe de conditions initiales et limites : $\forall \mathbf{x} \in \Omega,$ ${\mathcal I}[u](0,\mathbf{x})=0$, et $\forall \mathbf{x} \in \partial \Omega, t \in [0,T]$, ${\mathcal B}[u](t,\mathbf{x})=0$, o ${\mathcal B}$ est l'opŽrateur de frontire appliquŽ ˆ $\partial \Omega$ et  ${\mathcal I}$ est l'opŽrateur initial ˆ $t=0$. 
L'objectif d'un PINN \cite{raissi2019physics} est d'apprendre une approximation $u_{\theta}(t,\mathbf{x})$ de la solution $u(t,\mathbf{x})$ en optimisant les paramtres $\theta$ d'un rŽseau de neurones par la minimisation d'une fonction de perte ${\mathcal L}(\theta)$ (voir Problme~\ref{eq:loss}) composŽe des termes non nŽgatifs suivants:
\delete{
\begin{eqnarray}
{\mathcal L}_{\mathcal N}(\theta) & = & \int_{[0,T] \times \Omega} \left(\frac{\partial u_{\theta}}{\partial t}+{\mathcal N[}u_{\theta};\phi]\right)^2 dtd\mathbf{x}  \label{eq:col} \nonumber
\end{eqnarray}
\begin{align}
\scalebox{0.9}{${\mathcal L}_{\mathcal I}(\theta)  =  \int_{\Omega} ({\mathcal I}[u_{\theta}](0,\mathbf{x}))^2 d\mathbf{x}, \ {\mathcal L}_{\mathcal B}(\theta)  =  \int_{\partial \Omega} \left( {\mathcal B}[u_{\theta}](t,\mathbf{x}) \right)^2 dtd\mathbf{x}$} \nonumber 
\end{align}
}
\begin{eqnarray}
{\mathcal L}_{\mathcal N}(\theta) & = & \int_{[0,T] \times \Omega} \left(\frac{\partial u_{\theta}}{\partial t}+{\mathcal N[}u_{\theta}]\right)^2 dtd\mathbf{x}  \label{eq:col} \nonumber \\
{\mathcal L}_{\mathcal I}(\theta) & = & \int_{\Omega} ({\mathcal I}[u_{\theta}](0,\mathbf{x}))^2 d\mathbf{x}  \label{eq:init} \nonumber \\
{\mathcal L}_{\mathcal B}(\theta) & = & \int_{[0,T] \times \partial \Omega} \left( {\mathcal B}[u_{\theta}](t,\mathbf{x}) \right)^2 dtd\mathbf{x} \label{eq:bound} \nonumber 
\end{eqnarray}
\delete{
\begin{align}
\scalebox{0.9}{$\underset{\theta}{\min} \hspace{0.1cm} {\mathcal L}(\theta) = \underset{\theta}{\min} (  {\mathcal L}_{\mathcal N}(\theta)+\lambda_1{\mathcal L}_{\mathcal I}(\theta)+\lambda_2{\mathcal L}_{\mathcal B}(\theta)+\lambda_3R(\theta))$,} \label{eq:loss}
\end{align}
}
\begin{align}
 \underset{\theta}{\min} \hspace{0.1cm} {\mathcal L}(\theta)   = \underset{\theta}{\min} (  {\mathcal L}_{\mathcal N}(\theta)+\lambda_1{\mathcal L}_{\mathcal I}(\theta)+\lambda_2{\mathcal L}_{\mathcal B}(\theta)+\lambda_3R(\theta)), \label{eq:loss}
\end{align}
o $\lambda_1,\lambda_2,\lambda_3$ sont des hyperparamtres et $R(\theta)$ est un terme de rŽgularisation. En pratique, les intŽgrales ${\mathcal L}_{\mathcal N}(\theta)$, ${\mathcal L}_{\mathcal I}(\theta)$, et ${\mathcal L}_{\mathcal B}(\theta)$ sont approximŽes par des espŽrances calculŽes ˆ partir de $N_{\mathcal N}$ points de collocation, $N_{\mathcal I}$ points initiaux et $N_{\mathcal B}$ points aux bords, respectivement.\\
D'un point de vue mathŽmatique, les intŽgrales de Eq.~\eqref{eq:loss} s'obtiennent en approximant l'intŽgrale d'une fonction $f:\mathscr{D}\to \mathbb{R}$ ˆ partir de $N$ mesures de l'intŽgrande par une quadrature numŽrique comme suit : $\sum_{i=1}^N w_if(\mathbf{x}_i) \approx \int_{\mathscr{D}} f(\mathbf{x})d\mathbf{x},$ o $ w_i\geq 0$ sont des poids de quadrature.\\

Dans la section suivante, nous prŽsentons une nouvelle mŽthode de quadrature reposant sur la hessienne de $f$ et dŽrivons une borne supŽrieure sur l'erreur d'approximation, plus serrŽe que celle d'une mŽthode sŽlectionnant de manire rŽgulire les $N=N_{\mathcal N}+N_{\mathcal I}+N_{\mathcal B}$ points de quadrature.

\delete{It is well-known  that the accuracy of this approximation depends on the chosen quadrature rule, the regularity of $f$ and the number of quadrature points $N$. 
If this remark obviously holds for any machine learning problem minimizing with $N$ training data the empirical counterpart of some {\it true risk}, it is even truer when it comes to learn surrogate neural solvers of complicated PDEs. This explains why, despite a remarkable effectiveness, PINNs have been shown to face pathological behaviors. In particular, they can be subject to trivial solutions with 0 residuals while converging to an incorrect solution as illustrated, e.g., in \cite{Chandrajit2021,doumeche2023convergence} (characterized as ``failure modes'' of PINNs, see e.g., \cite{wang2020pinnsfailtrainneural}). 
One way to overcome this pitfall consists in resorting to a suitable regularization term $R(\theta)$ (in Eq.~\eqref{eq:loss}) as done in gPINN \cite{gPINN2021} that embeds the gradient of the PDE residuals in the loss so as to enforce their derivatives to be zero as well, or in \cite{doumeche2023convergence}, where the authors use a ridge regularization  associated with a Sobolev norm to make PINNs both consistent and strongly convergent. 
}

\delete{
Regularization apart, the location and distribution of the $N=N_{\mathcal N}+N_{\mathcal I}+N_{\mathcal B}$ quadrature points are key and they can have a significant influence on the accuracy and/or the convergence of PINNs. Yet, equispaced uniform grids and uniformly random sampling have been widely used up to now and it is only recently that the placement of these quadrature points has become an active area of research for PINNs leading to several adaptive nonuniform sampling methods (see an extensive comparison study, e.g., in \cite{Chenxi2022}). Beyond being easy to operate, one reason that may justify the still widespread use of uniform sampling stems from the resulting possibility to leverage theoretical frameworks for deriving error estimates for PINNs. For instance,  using a midpoint quadrature rule with a regular grid has led to the first  approximation error bounds with tanh PINNs (see, e.g., \cite{De_Ryck_2021,girault:hal-04518335}). On the other hand, taking advantage of  uniformly sampled collocation points and resorting to concentration inequalities, the authors of \cite{doumeche2023convergence} derived generalization bounds for these new family of networks. Setting theoretical considerations aside, several methods have been designed during the past four years for experimentally improving  uniform sampling approaches. Residual-based Adaptive Refinement \cite{doi:10.1137/19M1274067} (a.k.a. RAR), is a greedy adaptive method which consists in adding new collocation points along the learning iterations by selecting the locations where the PDE residuals are the largest. 
Although RAR has been shown to improve the performance of PINNs, its main limitation (beyond the requirement of a dense set of  collocation candidates) lies in the fact that it reduces the opportunity to explore other regions of the space by always picking locations with the largest residuals. Introduced in 2023, 
RAD \cite{Chenxi2022}, for Residual-based Adaptive Distribution, replaces the current collocation points by new ones drawn according to a distribution proportional to the PDE residuals, thus introducing some stochasticity in the sampling process.  A hybrid method, called RAR-D, combines RAR and RAD by stacking new points according to the density function. Both RAD and RAR-D (and other adaptive residual-based distribution variants, e.g., \cite{pmlr-v202-daw23a,peng2022})  have been shown to perform better than non-adaptive uniform sampling \cite{Chenxi2022}. In this category of methods, Retain-Resample-Release sampling (R3) algorithm \cite{pmlr-v202-daw23a} is the only one that accumulates collocation points in regions of high PDE residuals and which comes with guarantees. Indeed, the authors prove that this algorithm retains points from high residual regions if they persist over iterations  and releases points if they have been resolved by PINN training.\\

While the previous methods leverage the magnitude of the PDE residuals to guide the selection of the locations, some others employ their gradient. This is the case in \cite{gPINN2021} where the authors combine gPINN and RAR. In the same vein, the authors of \cite{subramanian2023} present an {\it Adaptive Sampling for Self-supervision} method that allows a combination of uniformly sampled points and data drawn according to the residuals or their gradient. The first-order derivative has been also recently exploited in PACMANN \cite{visser2024} which leverages gradient information for moving collocation points toward regions of higher residuals using gradient-based optimization.  These methods have been shown to further improve the performance of  PINNs, especially for PDEs where solutions have steep changes.  Drawing inspiration from these derivative-based methods, we define in the next section  a new provably accurate quadrature rule for approximating  the integral of a function. We prove that this method based on a simple trapezoid-based interpolation and second-order derivative information gives a tighter error bound than an equispaced uniform grid-based quadrature. 
Leveraging this finding, we present then, as far as we know, the first theoretically founded adaptive sampling method of collocation points for PINNs based on the Hessian of the PDE residuals. 

}

\section{Bornes d'erreurs de quadrature} \label{sec:quadrature}
Soient $a, b \in \mathbb{R}$ et une fonction $f:[a,b] \longrightarrow \mathbb{R}$. Pour rŽsoudre la quadrature de $f$, on choisit $x_0, \dots, x_N \in [a,b]$ et les poids $w_0, \dots, w_N$ peuvent tre obtenus en approximant $f$ par des fonctions polynomiales (mŽthode dite de Newton-Cotes). \delete{There exists another way to proceed where the collocation points are obtained as roots of a particular orthonormal polynomial, but this approach would become to computationally heavy for our purpose.} Afin d'Žviter le phŽnomne de Runge, o l'interpolation polynomiale prŽsente des piques aux bords de l'intervalle ˆ cause de la croissance de $\on{max}_{x \in [a,b]}|f^{(n)}(x)|$ comme une fonction de $n$, on propose ici de contr™ler l'expressivitŽ de l'approximation en utilisant une interpolation par de simples trapzes.

\delete{
In the following, we first present the quadrature rule when the quadrature points are evenly spaced in the domain and recall a known result on the upper bound on the approximation error in this trapezoid-based setting. Then, we introduced a refined quadrature rule which selects the quadrature points where the second-order derivative of $f$ varies the most. The main result of this section takes the form of a tighter upper bound on the total approximation error.
}

\subsection{Quadrature ˆ partir d'une grille uniforme} \label{sec:uniform}
DŽterminons tout d'abord l'erreur qui serait obtenue par une rgle basŽe sur des trapzes construits ˆ partir d'une grille uniforme de points de quadrature. 
On approxime tout d'abord $f$ sur $[x_1, x_2]$, avec $x_1, x_2 \in [a,b]$, par la droite passant par $(x_1, f(x_1))$ et $(x_2, f(x_2))$. On pose $h=x_2-x_1$. L'interpolation $p(x)$ est donnŽe par: 
\[
p(x)=\frac{x-x_1}{h}f(x_2)-\frac{x-x_2}{h}f(x_1).
\]
Comme il s'agit d'un polyn™me de degrŽ 1, la formule d'interpolation de Lagrange implique que $\exists \xi \in [x_1, x_2]$ tel que
\[
f(x)-p(x)=\frac{f''(\xi)}{2}(x-x_1)(x-x_2).
\]

\delete{
Note that while the existence of $\xi$ is guaranteed, we don't know its position within $[z_1, z_2]$. Moreover, by setting $s=\frac{x-z_1}{h}$, we see that $(x-z_1)(x-z_2)=s(s-1)h^2$. It follows that the error $E_{z_1,z_2}=\int_{z_1}^{z_1}f(x)dx-\int_{z_1}^{z_2}p(x)dx$ on $[z_1, z_2]$ satisfies
\[
E_{z_1,z_2}=-\frac{1}{12}h^3f''(\xi).
\]

}
Divisons dŽsormais $[a,b]$ en $N$ sous-intervalles de longueur $h=\frac{b-a}{N}$. On pose $x_0=a$ et $x_i=x_0+hi$ pour $1 \leq i \leq N$. La fonction $f$ est approximŽe sur chaque $[x_i, x_{i+1}]$ par une droite, et donc $p(x)$ devient une fonction linŽaire par morceaux. L'intŽgrale de $f$ peut alors tre approximŽe par l'intŽgrale des $N$ trapzes formŽs par $p(x)$.  En exploitant \cite[Th.20.5.1]{Hamming}, on peut prouver le rŽsultat suivant:

\delete{For each $1 \leq i \leq N$, there exists $\xi_i\in [x_{i-1}, x_{i}]$ such that the total integration error $E_{\mathrm{tot},\mathrm{unif}}$ on $[x_0, x_N]=[a,b]$ is defined as follows:
\[
E_{\mathrm{tot},\mathrm{unif}}=\sum_{i=1}^N -\frac{1}{12}h^3f''(\xi_i).
\]
}

\begin{proposition}
Soient $a,b \in \mathbb{R}$, $N \in \mathbb{N}^*$, et $f:[a,b]\longrightarrow \mathbb{R}$ une fonction de classe  $C^2$, i.e., avec une dŽrivŽe seconde continue. Alors l'erreur totale $E_{\mathrm{tot},\mathrm{unif}}$ de la quadrature uniforme de $f$ par $N$ trapzes est bornŽe par: 
\begin{eqnarray}
E_{\mathrm{tot},\mathrm{unif}} \leq B_{\mathrm{tot},\mathrm{unif}}=\frac{1}{12}\frac{(b-a)^3}{N^2}\underset{x \in [a,b]}{\on{max}}|f''(x)|. \label{eq:upper1}
\end{eqnarray}
\end{proposition}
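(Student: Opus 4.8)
The plan is to combine the pointwise interpolation-error formula already established in the text with a summation argument over the $N$ subintervals, followed by a single uniform bound on the second derivative. First I would recall that on a single subinterval $[x_{i-1},x_i]$ of length $h=\frac{b-a}{N}$, the error between $f$ and its linear interpolant $p$ is $f(x)-p(x)=\frac{f''(\xi)}{2}(x-x_{i-1})(x-x_i)$ for some $\xi\in[x_{i-1},x_i]$; integrating this over the subinterval and using the substitution $s=\frac{x-x_{i-1}}{h}$ (so that $(x-x_{i-1})(x-x_i)=s(s-1)h^2$ and $\int_0^1 s(s-1)\,ds=-\frac{1}{6}$) gives, after invoking \cite[Th.20.5.1]{Hamming} to pull the $f''$ values out of the integral with a single mean value $\xi_i$, a per-subinterval error of the form $-\frac{1}{12}h^3 f''(\xi_i)$.

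Next I would sum over $i=1,\dots,N$. Since the signed errors add, the total error is $E_{\mathrm{tot},\mathrm{unif}}=\sum_{i=1}^N -\frac{1}{12}h^3 f''(\xi_i)$. Taking absolute values and bounding each $|f''(\xi_i)|$ by $M:=\max_{x\in[a,b]}|f''(x)|$ (finite because $f\in C^2$ on the compact interval $[a,b]$) yields $E_{\mathrm{tot},\mathrm{unif}}\le \frac{1}{12}Nh^3 M$. Finally I would substitute $h=\frac{b-a}{N}$, so that $Nh^3=N\cdot\frac{(b-a)^3}{N^3}=\frac{(b-a)^3}{N^2}$, giving exactly the claimed bound $B_{\mathrm{tot},\mathrm{unif}}=\frac{1}{12}\frac{(b-a)^3}{N^2}\max_{x\in[a,b]}|f''(x)|$.

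The only genuinely delicate point is the application of \cite[Th.20.5.1]{Hamming} (the weighted mean value theorem for sums): it is what licenses replacing the collection of unknown points $\xi$ arising from Lagrange's remainder on different pieces of a single subinterval — or, in a cruder version of the argument, the $N$ points $\xi_i$ across subintervals — by controllable quantities. If one wishes to avoid that theorem entirely, the bound can be obtained more directly: on each subinterval, $\left|\int_{x_{i-1}}^{x_i}(f-p)\,dx\right|\le \frac{M}{2}\int_{x_{i-1}}^{x_i}|(x-x_{i-1})(x-x_i)|\,dx=\frac{M}{2}\cdot\frac{h^3}{6}=\frac{Mh^3}{12}$, and summing the $N$ such contributions gives the same estimate. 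I would present this direct route as the cleanest, mentioning the \cite[Th.20.5.1]{Hamming} refinement only to justify the sharper \emph{equality} form $-\frac{1}{12}h^3 f''(\xi_i)$ that the surrounding text alludes to. No step requires more than elementary calculus once the pointwise formula is in hand, so the write-up should be short.
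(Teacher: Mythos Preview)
Your argument is correct and follows the same skeleton as the paper's: derive the per-subinterval error $-\frac{1}{12}h^3 f''(\xi_i)$, sum over the $N$ subintervals, and bound by $\max_{[a,b]}|f''|$. The one place you diverge is in how the sum is handled. You bound each $|f''(\xi_i)|$ by $M$ term by term, which already yields the stated inequality. The paper instead applies \cite[Th.20.5.1]{Hamming} to the \emph{sum} $\sum_{i=1}^N f''(\xi_i)$ (equal weights $c_i=\frac{1}{12}h^3\ge 0$), producing a single $\xi\in[\xi_1,\xi_N]$ with $E_{\mathrm{tot},\mathrm{unif}}=-\frac{1}{12}Nh^3 f''(\xi)=-\frac{1}{12}(b-a)h^2 f''(\xi)$, and only then bounds $|f''(\xi)|\le M$. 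Your route is more elementary and fully sufficient for the bound; the paper's buys the sharper intermediate \emph{equality} with a single mean-value point, in the spirit of the classical composite-trapezoid error formula. One small quibble: you invoke \cite[Th.20.5.1]{Hamming} to pull $f''$ out of the \emph{integral} on a single subinterval, but that result is stated for finite sums with nonnegative weights; what is actually needed there is the weighted mean-value theorem for integrals (the weight $(x-x_{i-1})(x-x_i)$ has constant sign on $[x_{i-1},x_i]$). Your alternative ``direct'' route sidesteps this issue cleanly.
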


\delete{
\begin{proof}
Consider the interval $[a,b]$ and divide it into $N$ subintervals of length $h=\frac{b-a}{N}$. We set $x_0=a$ and $x_i=x_0+hi$ for $1 \leq i \leq N$. We approximate the function $f$ on each $[x_i, x_{i+1}]$ by a trapezoid, and so $p(x)$ is a piece-wise linear function given by the above equation on on each $1 \leq i \leq N$, there exists $\xi_i\in [x_{i-1}, x_{i}]$ such that the total integration error $E_{\mathrm{tot},\mathrm{unif}}$ on $[x_0, x_N]=[a,b]$ is
\[
E_{\mathrm{tot},\mathrm{unif}}=\sum_{i=1}^N -\frac{1}{12}h^3f''(\xi_i).
\]

By applying \cite[Theorem 20.5.1]{Hamming} to $-E_{\mathrm{tot},\mathrm{unif}}$ above, we see that there exists $\xi \in [\xi_1, \xi_N]$ such that
\[
E_{\mathrm{tot},\mathrm{unif}}=-\frac{1}{12}Nh^3f''(\xi)=-\frac{1}{12}(b-a)h^2f''(\xi).
\]
It follows that for this uniform choice of points $x_0, \dots, x_N$, the total quadrature error satisfies
\[
\displaystyle |E_{\mathrm{tot},\mathrm{unif}}| \leq B_{\mathrm{tot},\mathrm{unif}}
\]
where $B_{\mathrm{tot},\mathrm{unif}}=\frac{1}{12}\frac{(b-a)^3}{N^2}\underset{x \in [a,b]}{\on{max}}|f''(x)|$.
\end{proof}
}

\subsection{Quadrature basŽe sur la hessienne} \label{sec:refined}

 Plut™t que de choisir les points de quadrature sur une grille uniforme, nous proposons une {\it mŽthode raffinŽe} o la sŽlection s'adapte aux variations de la dŽrivŽe seconde de $f:[a,b] \to \mathbb{R}$ de classe $C^2$. 
 On divise $[a,b]$ en $k$ sous-intervalles $I_j$, $1 \leq j \leq k$, de longueur respective $l=\frac{b-a}{k}$. Pour permettre une comparaison avec la mŽthode uniforme, l'interpolation est faite telle que le nombre total de trapzes sur tous les $I_j$ est $N$.  
 On divise chaque $I_j$ en $n_j$  sous-intervalles o
\begin{equation}\label{eq:n_j}
   n_j=\biggl\lceil N \frac{\sqrt{M_j}}{ \sum_{p=1}^k \sqrt{M_p}}\biggl\rceil  
\end{equation}
avec $M_j=\underset{x \in I_j}{\on{max}}|f''(x)|$. 
En raison de la fonction $\lceil . \rceil$, $\sum_{j=1}^k n_j \approx \sum_{j=1}^k N \frac{\sqrt{M_j}}{ \sum_{p=1}^k \sqrt{M_p}}=N$ avec un Žcart au plus $k$,    celui-ci  devenant nŽgligeable quand $N \gg k$.
Pour chaque  $I_j$, on effectue une approximation de l'intŽgrale de $f_{\,\mkern 1mu \vrule height 2ex\mkern2mu I_j} : I_j \longrightarrow \mathbb{R}$ par $n_j$ trapzes puis on somme les rŽsultats. 
Nous pouvons dŽsormais Žnoncer  le rŽsultat principal:
\begin{theorem}\label{thm:tight}
Soient $a,b, \in \mathbb{R}$, $k, N \in \mathbb{N}^*$ avec $k \leq N$, et $f:[a,b]\longrightarrow \mathbb{R}$ une fonction de classe $C^2$. Alors le majorant $B_{\mathrm{tot},\mathrm{refined}} $ de l'erreur totale $E_{\mathrm{tot},\mathrm{refined}}$ de notre mŽthode de quadrature raffinŽe de l'intŽgrale de $f$ est plus fine que celle de Eq.~\eqref{eq:upper1} pour un mme nombre de trapzes:
\[
B_{\mathrm{tot},\mathrm{refined}} = \displaystyle\sum_{j=1}^k \frac{l^3}{12}\frac{1}{\bigg(\biggl\lceil N \frac{\sqrt{M_j}}{ \sum_{p=1}^k \sqrt{M_p}}\biggl\rceil\bigg)^2}|f''(\xi_j)| \leq B_{\mathrm{tot},\mathrm{unif}}.
\]
Pour chaque $1 \leq j \leq k$, $\xi_j$ est un ŽlŽment bien choisi de $I_j$. En particulier, plus $f''$ varie et plus l'inŽgalitŽ prŽcŽdente est stricte, et donc en faveur de notre mŽthode raffinŽe.
\end{theorem}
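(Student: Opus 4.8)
The plan is to start from the per-subinterval error formula already established in Section~\ref{sec:uniform} and apply it separately to each $I_j$. Recall that for a trapezoidal quadrature of a $C^2$ function on an interval of length $\ell$ split into $n$ equal pieces, the same argument combining the Lagrange remainder with \cite[Th.20.5.1]{Hamming} gives a total error of the form $-\frac{1}{12}\frac{\ell^3}{n^2}f''(\xi)$ for some point $\xi$ in the interval. Applying this on each $I_j$ with $\ell = l$ and $n = n_j$ yields the existence of $\xi_j \in I_j$ with per-block error $-\frac{l^3}{12 n_j^2} f''(\xi_j)$. Summing over $j$ and taking absolute values, together with $|f''(\xi_j)| \le M_j$, gives
\[
E_{\mathrm{tot},\mathrm{refined}} \le B_{\mathrm{tot},\mathrm{refined}} = \sum_{j=1}^k \frac{l^3}{12\, n_j^2}\, |f''(\xi_j)| \le \sum_{j=1}^k \frac{l^3}{12\, n_j^2}\, M_j .
\]
This establishes the stated closed form for $B_{\mathrm{tot},\mathrm{refined}}$; it remains to compare it with $B_{\mathrm{tot},\mathrm{unif}}$.

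For the comparison, first I would drop the ceiling and use $n_j \ge N \sqrt{M_j}/\bigl(\sum_p \sqrt{M_p}\bigr)$, so that $\frac{1}{n_j^2} \le \frac{\bigl(\sum_p \sqrt{M_p}\bigr)^2}{N^2 M_j}$. Substituting into the bound and using $l = (b-a)/k$,
\[
B_{\mathrm{tot},\mathrm{refined}} \le \frac{(b-a)^3}{12\, k^3 N^2}\sum_{j=1}^k \frac{M_j \bigl(\sum_{p=1}^k \sqrt{M_p}\bigr)^2}{M_j}
= \frac{(b-a)^3}{12\, k^3 N^2}\, k\,\Bigl(\sum_{p=1}^k \sqrt{M_p}\Bigr)^2
= \frac{(b-a)^3}{12\, N^2}\,\frac{\bigl(\sum_{p=1}^k \sqrt{M_p}\bigr)^2}{k^2}.
\]
Now $\bigl(\sum_{p=1}^k \sqrt{M_p}\bigr)^2 \le k \sum_{p=1}^k M_p \le k^2 \max_p M_p = k^2 \max_{x\in[a,b]}|f''(x)|$ by Cauchy--Schwarz, so the right-hand side is $\le \frac{(b-a)^3}{12 N^2}\max_{x\in[a,b]}|f''(x)| = B_{\mathrm{tot},\mathrm{unif}}$, which is exactly the claimed inequality.

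For the last sentence, about strictness, the chain above shows the slack between $B_{\mathrm{tot},\mathrm{refined}}$ and $B_{\mathrm{tot},\mathrm{unif}}$ is governed by how far the Cauchy--Schwarz step $\bigl(\sum \sqrt{M_p}\bigr)^2 \le k \sum M_p$ and the bound $\sum M_p \le k \max_p M_p$ are from equality; both are tight precisely when all the $M_j$ coincide, i.e.\ when $f''$ has constant oscillation across the blocks, and the gap grows as the $M_j$ become more spread out — which is the informal statement that the more $f''$ varies, the more our refined method wins. The main obstacle I anticipate is bookkeeping around the ceiling function: the statement writes $B_{\mathrm{tot},\mathrm{refined}}$ with the $\lceil\cdot\rceil$ left in, so one must be careful that the inequality $n_j \ge \lceil N\sqrt{M_j}/\sum_p\sqrt{M_p}\rceil \ge N\sqrt{M_j}/\sum_p\sqrt{M_p}$ goes in the right direction (it does, since the ceiling only increases $n_j$, which only decreases the bound), and that the ``$\approx N$'' in $\sum_j n_j$ does not actually affect the inequality — only the per-block lower bounds on $n_j$ are used, so no constraint on the sum is needed. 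A secondary subtlety is justifying the per-block error formula when $n_j = 1$ or when $M_j = 0$ for some $j$ (in which case that block contributes zero error and can be dropped), but these are degenerate cases handled by the same Lagrange-remainder argument.
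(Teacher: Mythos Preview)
Your proof is correct and follows essentially the same route as the paper: per-block trapezoid error via the Lagrange remainder and \cite[Th.20.5.1]{Hamming}, then drop the ceiling in $n_j$, then bound $\bigl(\sum_p \sqrt{M_p}\bigr)^2 \le k^2 M$ to recover $B_{\mathrm{tot},\mathrm{unif}}$. The only cosmetic differences are that you apply $|f''(\xi_j)|\le M_j$ before dropping the ceiling (the paper does it at the very end) and that you reach $\bigl(\sum_p \sqrt{M_p}\bigr)^2 \le k^2 M$ via Cauchy--Schwarz plus $\sum_p M_p \le kM$ rather than the direct bound $\sqrt{M_p}\le \sqrt{M}$; your route has the mild bonus of isolating the sharper intermediate quantity $\tfrac{1}{k^2}\bigl(\sum_p \sqrt{M_p}\bigr)^2$, which makes the strictness discussion a bit more transparent.
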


\begin{proof}[Preuve (sketch)]
Pour tout $1 \leq j \leq k$, on pose $h_j=\frac{l}{n_j}$ et on applique \cite[Th.20.5.1]{Hamming} pour montrer qu'il existe $\xi_j \in I_j$ tel que l'erreur totale de quadrature vŽrifie:
$
E_{\mathrm{tot},\mathrm{refined}}=\sum_{j=1}^k -\frac{1}{12}lh_j^2f''(\xi_j).
$
En posant 
$M=\underset{1 \leq j \leq k}{\on{max}}M_j$,  alors $E_{\mathrm{tot},  \mathrm{refined}}$ est  majorŽe par:
\begin{eqnarray}
B_{\mathrm{tot},\mathrm{refined}} & = & \displaystyle\sum_{j=1}^k \frac{l^3}{12}\frac{1}{\bigg(\biggl\lceil N \frac{\sqrt{M_j}}{ \sum_{p=1}^k \sqrt{M_p}}\biggl\rceil\bigg)^2}|f''(\xi_j)| \nonumber\\[5pt]
 & \leq &\displaystyle \frac{l^3}{12N^2} \sum_{j=1}^k \bigg(\sum_{p=1}^k\frac{\sqrt{M_p}}{\sqrt{M_j}}\bigg)^2|f''(\xi_j)|  \nonumber\\[5pt]
& \leq & \displaystyle \frac{l^3}{12N^2} \sum_{j=1}^k k^2 \frac{M}{M_j}|f''(\xi_j)| \nonumber\\[2pt]
 & \leq & \displaystyle \frac{l^3k^3}{12N^2} M = B_{\mathrm{tot},\mathrm{unif}}. \nonumber 
\end{eqnarray}

On note que la dernire inŽgalitŽ est obtenue en utilisant $|f''(\xi_j)| \leq M_j$, et donc plus $f''$ varie, plus cette inŽgalitŽ devient stricte en faveur de notre rgle de quadrature.
\end{proof}

\subsection{Illustration de la borne du ThŽorme \ref{thm:tight}}
Illustrons ici le comportement de notre mŽthode de quadrature pour la fonction $f(x)=\sin(\frac{1}{\sqrt{x}})$ sur $[0.1,1]$. Afin de dŽterminer $M_j$ pour chaque $1 \leq j \leq k$, on prend $S=100$ points rŽgulirement espacŽs $x_{j,s} \in I_j$, on calcule $M_j=\underset{1 \leq s \leq S}{\on{max}}|f''(x_{j,s})|$ et on dŽduit $n_j$ d'aprs Eq.~\eqref{eq:n_j}. Notons ici que le cožt de calcul de $M_j$ n'a pas d'importance. L'objectif est de montrer que sŽlectionner $N$ points ˆ partir de $f''$ donne une plus petite erreur de quadrature que d'utiliser $N$ points rŽgulirement espacŽs. En exploitant cette idŽe dans les PINNs, le mme budget de points de collocation sera bien sžr utilisŽ pour toutes les mŽthodes dans le processus d'entra"nement.

Pour la mŽthode uniforme, on sŽlectionne $N+1$ points rŽgulirement espacŽs entre $a$ et $b$ inclus. Ces points forment les extrŽmitŽs de $N$ trapzes. On calcule alors leurs aires respectives comme vu en Section \ref{sec:uniform} avant de les sommer.

\begin{figure}[h]
\begin{center}
\includegraphics[height=0.27\textwidth]{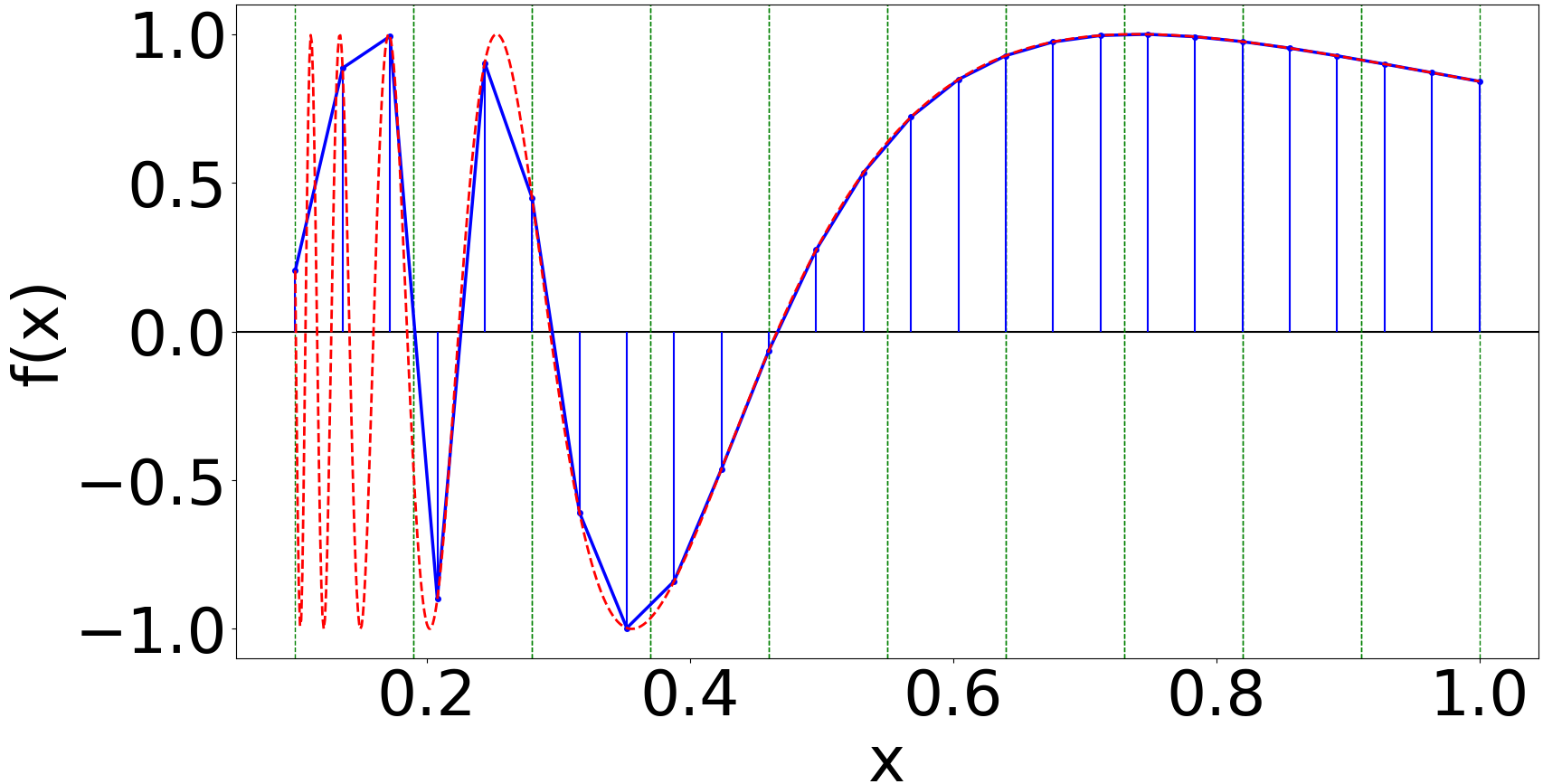}
\includegraphics[height=0.27\textwidth]{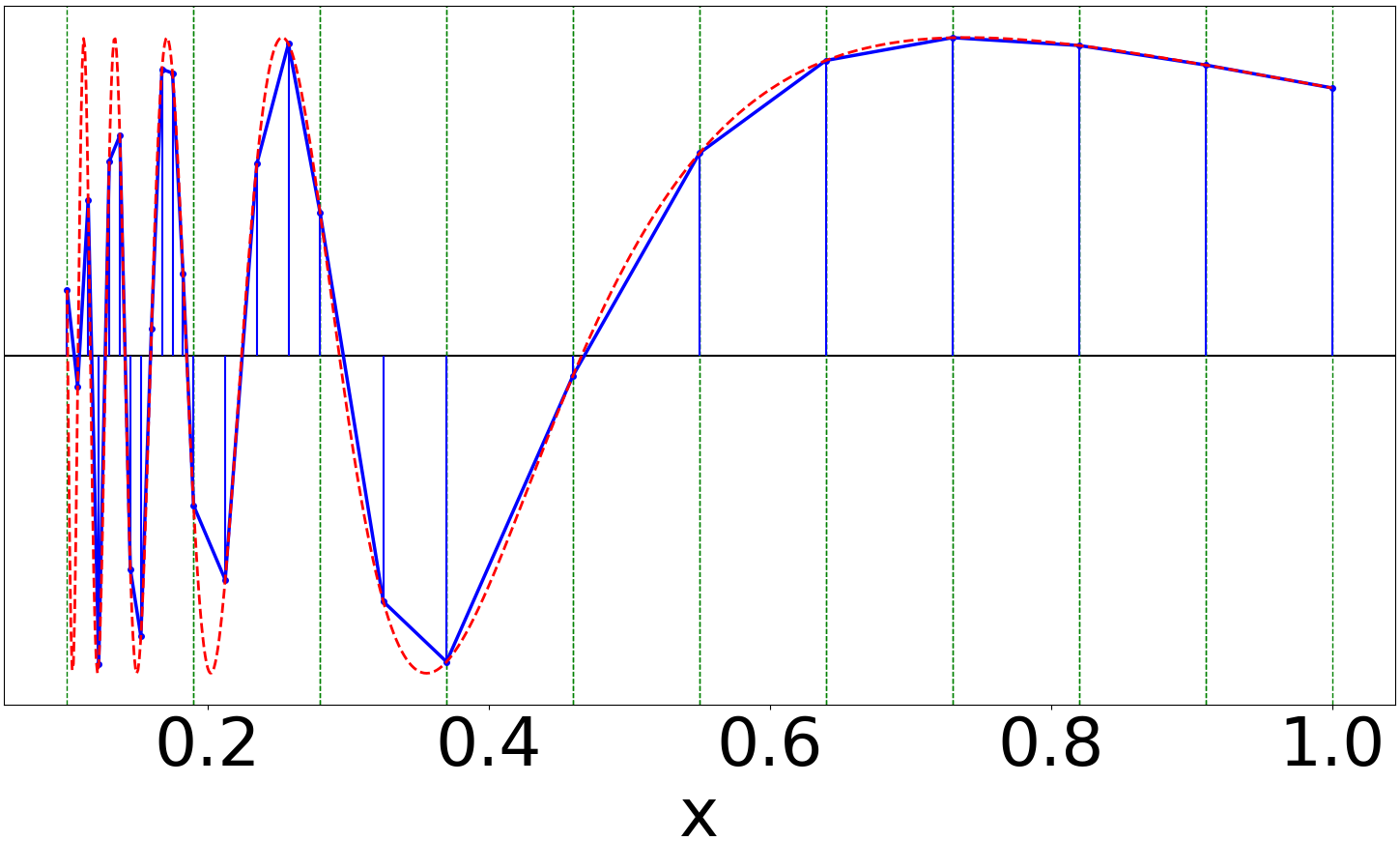}
\caption{$f(x)=\sin(\frac{1}{\sqrt{x}})$ (en rouge) et ses approximations (en bleu) avec $N=25$ ; (ˆ gauche) : mŽthode uniforme; (ˆ droite) : mŽthode raffinŽe avec $k=10$. \delete{Les erreurs relatives de quadrature sont $16.4 \%$ et $1.9 \%$ respectivement.}}
\label{fig:function_2}
\end{center}
\end{figure}

La Fig.~\ref{fig:function_2} illustre le comportement des deux mŽthodes de quadrature pour $N=25$ et $k=10$, et notamment celui pathologique de la mŽthode uniforme qui ne peut capturer les fortes variations de $f$ (courbe rouge) sur de petits intervalles. Au contraire, notre mŽthode utilise seulement une petite partie du budget pour approximer la partie droite de la fonction, et garde la majoritŽ des points de collocation pour les zones o les variations sont fortes, rŽduisant ainsi considŽrablement l'erreur de quadrature de $16.4 \%$ ˆ $1.89 \%$.

\section{ƒchantillonnage adaptatif et PINNs}\label{sec:PINN} 


Pour comparer les principales mŽthodes d'Žchantillonnage adaptatif pour les PINNs, nous utilisons le cadre de la mŽthode RAD~\cite{Chenxi2022}, o les $N$ points de collocation sont tirŽs selon une distribution proportionnelle ˆ un {\it critre d'intŽrt}. Ce dernier peut prendre la forme des {\bf rŽsiduels} de l'EDP comme dans la version originelle RAD \cite{Chenxi2022}, tre adaptŽ au {\bf gradient} des rŽsiduels  \cite{subramanian2023}, ˆ la {\bf hessienne} des rŽsiduels pour notre mŽthode, ou encore une {\bf  distribution uniforme} comme utilisŽ dans un PINN standard \cite{raissi2019physics}. 
Soit la distribution gŽnŽrique suivante:
\begin{eqnarray}
    d(\mathbf{x}) & \propto & \frac{\gamma(\mathbf{x})^{\tau}}{\mathbb{E}[\gamma(\mathbf{x})^{\tau}]}+c, \label{eq:PDF}
\end{eqnarray}
o $\tau$ et $c$ sont des hyperparamtres permettant de contr™ler la concentration des points. Les mŽthodes d'Žchantillonnage de l'Žtat de l'art peuvent toutes tres vues comme des cas spŽciaux de Eq.~\eqref{eq:PDF}, i.e., comme des instanciations d'un algorithme gŽnŽrique que nous appelons dans la suite $\bigstar$-RAD, o  res-RAD, grad-RAD, hessian-RAD, et unif-RAD, correspondent respectivement ˆ une mŽthode basŽe sur les rŽsiduels (i.e., o $\gamma(\mathbf{x})=f(\mathbf{x})$), leur gradient ($\gamma(\mathbf{x})=f'(\mathbf{x})$), la hessienne ($\gamma(\mathbf{x})=f''(\mathbf{x})$) et la distribution uniforme (PINN standard obtenu avec $\tau=0$ et $c \rightarrow \infty$). Si les dŽrivŽes de $f$ ne sont pas ˆ valeurs scalaires, on utilise la norme du vecteur ou de la matrice correspondante. Le pseudo-code de $\bigstar$-RAD est prŽsentŽ dans l'Algorithme~\ref{algo:*RAD}. Par ailleurs, dans ce qui suit, on utilise $\lambda_3=0$ (cf. Eq.~\eqref{eq:loss}).\\
Nous prŽsentons dans ce qui suit les rŽsultats obtenus avec les quatre mŽthodes d'Žchantillonnage sur deux EDPs en 2D\delete{: l'Žquation de Poisson et l'Žquation de diffusion-rŽaction}\footnote{Le code est disponible sur ce \href{https://github.com/Antoine-ml-code/Adaptive-Sampling-for-Collocation-Points-in-PINNs-ECML-2025.git}{dŽp™t GitHub}.}.

\begin{algorithm}[t]
\caption{$\bigstar$-RAD} \label{algo:*RAD}
\begin{algorithmic}[1]
\State Fixer $\bigstar$ $\in \{ ``\text{res}", ``\text{grad}", ``\text{hessian}", ``\text{unif}"\}$, $\tau$, $c$, $N$ et $\#epochs$;
\State SŽlectionner alŽatoirement un ensemble de points $S$;
\State Entra"ner un PINN pour un nombre d'Žpoques donnŽ;
\While{$\#epochs$ n'est pas atteint}
    \State Construire une distribution $d(\mathbf{x})$ de Eq.~\eqref{eq:PDF} pour $\bigstar$ ˆ partir d'un ensemble de points alŽatoires;
    \State $S \leftarrow$ nouvel ensemble de $N$ points i.i.d. $\sim d(\mathbf{x})$;
    \State Entra"ner le PINN pour un nombre d'Žpoques;
\EndWhile
\end{algorithmic}
\end{algorithm}

\subsection{ƒquation de Poisson 2D} \label{sec:poisson}
L'Žquation de Poisson est une EDP elliptique du second ordre utilisŽe en physique thŽorique et dŽfinie comme suit: $
\Delta u=F(x,y),$ o $(x,y) \in [0,1]^2$, et o $F$ est prise telle que  $u(x,y)=2^{4a}x^a(1-x)^ay^a(1-y)^a$ avec $a=10$ est la solution analytique (Fig.~\ref{fig:Poisson} en haut ˆ gauche).
Un PINN est appris en utilisant la fonction d'activation {\tt tanh} sur un rŽseau entirement connectŽ et composŽ de 3 couches cachŽes de 20 neurones. Les paramtres suivants sont utilisŽs:  $\#epochs=20000$, le taux d'apprentissage $\eta=10^{-3}$, $N=400$ tirŽs selon $d(\mathbf{x})$ approximŽe ˆ partir de $40000$ candidats, $\tau=1/2$  et $c=0$. Le rŽŽchantillonnage est rŽalisŽ toutes les 1000 itŽrations.

La remarque la plus frappante que l'on peut faire ˆ partir de la Fig.~\ref{fig:Poisson} (en haut ˆ droite) est que notre mŽthode hessian-RAD profite pleinement des variations abruptes des solutions de Poisson pour converger beaucoup plus rapidement que les autres. Environ 1000 itŽrations suffisent, tandis que les stratŽgies concurrentes nŽcessitent beaucoup plus d'itŽrations pour se stabiliser. Fait intŽressant, mme aprs 20000 itŽrations, lorsque les mŽthodes ont convergŽ vers une solution exacte, l'Žcart en termes d'erreur de prŽdiction en faveur de notre mŽthode est important, comme l'illustrent les quatre heatmaps de la Fig.~\ref{fig:Poisson} (partie infŽrieure). Les erreurs $L_2$ sont $6 \times 10^{-5}$, $5 \times 10^{-6}$, $2 \times 10^{-6}$ et $7 \times 10^{-7}$ respectivement pour unif-RAD, res-RAD, grad-RAD et hessian-RAD. 
Bien que le calcul de la hessienne soit plus cožteux, cette charge supplŽmentaire est raisonnable et donc compensŽe par une meilleure prŽdiction.

\begin{figure}[t]
\includegraphics[width=0.8\textwidth]{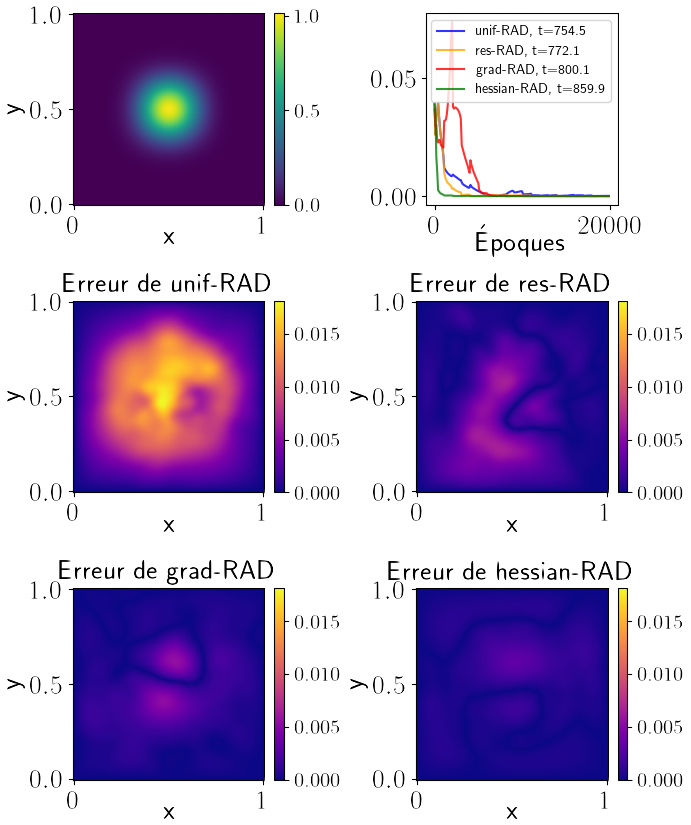}
\caption{(Haut gauche) Solution analytique de l'Žquation de Poisson ; (Haut droit) Erreur test $L_2$ au cours des 20000 premires itŽrations, et temps de calcul (en s) ; (Milieu et bas) Heatmaps des erreurs des 4 mŽthodes aprs 20000 itŽrations.}
\label{fig:Poisson}
\end{figure}

\delete{
\subsubsection{Brinkman-Forchheimer:}
The Brinkman–Forchheimer model is a extended Darcy's law and is used to describe wall-bounded porous media flows: 

$$\displaystyle -\frac{\nu_e}{\epsilon}\frac{d^2u}{d \mathbf{x}^2}+\frac{\nu}{K} u(\mathbf{x})=g,$$
with $\mathbf{x} \in [0,H]$, $\nu_e=\nu=10^{-3}$, $\epsilon=0.4$, $K=10^{-3}$, $g=1$, and $H=1$. The analytical solution is $u(x)=\displaystyle \frac{gK}{\nu}\left[1-\frac{\on{cosh}(r(x-\frac{H}{2})}{\on{cosh}(\frac{rH}{2})} \right]$ with $r=\displaystyle \sqrt{\frac{\nu \epsilon}{\nu_e K}}$ and is 
depicted in Fig.~\ref{fig:Brinkman} (right, black curve). $u$ represents the fluid velocity, $g$ denotes the external force, $\nu$ is the kinetic viscosity of fluid, $\epsilon$ is the
porosity of the porous medium, and $K$ is the permeability. The effective viscosity $\nu_e$ is related to the pore structure
and hardly to be determined. A no-slip boundary condition is imposed, i.e., $u(0)=u(1)=0$. We learn a PINN with the tanh activation function composed of 3 hidden layers with 20 neurons followed by a fully connected layer. We used the following parameters: $\#epochs=30000$, $\eta=10^{-3}$, $N=30$, $\tau=1/2$  and $c=0$. We resample every 1000 epochs.

\begin{figure}[t]
{\centering
\includegraphics[width=0.42\textwidth]{Bilan Exemple 2 (1D) L2 error from 1000 et 7000 epochs (no title).png}
\includegraphics[width=0.42\textwidth]{Bilan Exemple 2 (1D) collocation points at 3000 and loss and absolute value of loss at 2999 (no title).png}
\caption{(Left) Comparison of the $L_2$-test errors between iterations 1000 and 7000 on Brinkman-Forchheimer; (Right): Analytical solution of the PDE (black), normalized loss (purple dashed) and $|f''|$ (green) after 3000 epochs, and collocation points (in blue) generated by hessian-RAD after 3000 epochs.}
\label{fig:Brinkman}}
\end{figure}

Fig.~\ref{fig:Brinkman} (left) reports the $L_2$-test error computed along the first 7000 training epochs before convergence of the 4 competing methods. If we can observe that the three adaptive methods (using $f$, $f'$ and $f''$) are better than a standard uniform sampling-based PINN (blue line), this figure also states that the convergence of derivative-based methods (both $f'$ and $f''$) is a bit slower than a residual-based sampling. The reason of this phenomenon comes from the shape of the function which, apart the initial and final steep changes, presents a large plateau. To analyze the impact of the latter, we plot on Fig.~\ref{fig:Brinkman} (right) the residuals (dashed purple line) as well as $f''$ (green line) after 3000 epochs (illustrating a situation where $f$ is much better than $f''$). As expected, as $f''$ does not vary much between $0.3$ and $0.6$, hessian-RAD places only a few  collocation points along this interval, keeping most of the budget where it varies the most. Consequently, the resulting PINN makes errors in this region that do not affect too much the empirical loss, but leading to a poor behavior at test time. The same interpretation can be provided for grad-RAD, both methods requiring more iterations to converge. Nevertheless, note  that hessian-RAD reaches eventually the best prediction.

}

\subsection{ƒquation de diffusion-rŽaction 2D} 
L'Žquation de diffusion-rŽaction est dŽfinie par : $\frac{\partial u}{\partial t}=D\frac{\partial^2 u}{\partial x^2}+F(x,t)$, $x \in [-\pi, \pi]$, $t \in [0,1]$, $u(x,t)$ est la  concentration de solutŽ, $D=1$ reprŽsente le coefficient de  diffusion, et $F(x,t)=e^{-t}\big[\frac{3}{2}\sin(2x)+\frac{8}{3}\sin(3x)+\frac{15}{4}\sin(4x)+\frac{63}{8}\sin(8x) \big]$ est la rŽaction chimique. Les conditions initiales et aux bords sont $u(x,0)=\sum_{i=1}^4\frac{\sin(ix)}{i}+\frac{\sin(8x)}{8}$ et $u(-\pi,t)=u(\pi,t)=0$. La solution  est dŽcrite sur la Fig.~\ref{fig:Diffusion2} (haut gauche).
La mme architecture de PINN qu'en \ref{sec:poisson} est utilisŽe avec les paramtres $\#epochs=100000$, $\eta=10^{-4}$, $N=50 \sim d(\mathbf{x})$ approximŽe ˆ partir de $5000$ candidats, $\tau=1/2$  et $c=0$.\\ 
ƒtonnamment sur cette EDP, le niveau des rŽsiduels n'est que trs peu informatif pour bien Žchantillonner les points de collocation, probablement du fait d'un {\it loss landscape} trs peu lisse (contrairement ˆ la Section \ref{sec:poisson}) liŽ aux fortes variations de la solution. La mŽthode res-RAD est ainsi bien moins efficace que les autres avec une erreur 5 ˆ 10 fois supŽrieure ($3 \times 10^{-2}$), alors que celles d'unif-RAD, grad-RAD et hessian-RAD sont $6 \times 10^{-3}$, $3 \times 10^{-3}$ et $3 \times 10^{-3}$ respectivement. Les deux mŽthodes basŽes sur les dŽrivŽes d'ordre 1 et 2 (voir heatmaps d'erreur sur Fig.~\ref{fig:Diffusion2} - bas)  capturent bien mieux les variations de la loss, tout comme la mŽthode uniforme (haut droite) qui est un peu moins performante mais permet nŽanmoins de bien couvrir le domaine. On notera que les 3 distributions d'erreurs sont trs diffŽrentes, celle d'unif-RAD tendant ˆ tre rŽpartie sur l'ensemble du domaine, celles de grad-RAD et hessian-RAD Žtant plus localisŽes autour de zones spŽcifiques.

\delete{
\begin{figure}[t]
\includegraphics[width=0.47\textwidth]{Differences Diffusion-reaction.png}
\caption{Heatmaps of errors of the 4 sampling methods after 20000 epochs for the Diffusion-reaction PDE.}
\label{fig:Diffusion1}
\end{figure}
}
\begin{figure}[t]
\includegraphics[width=0.8\textwidth]{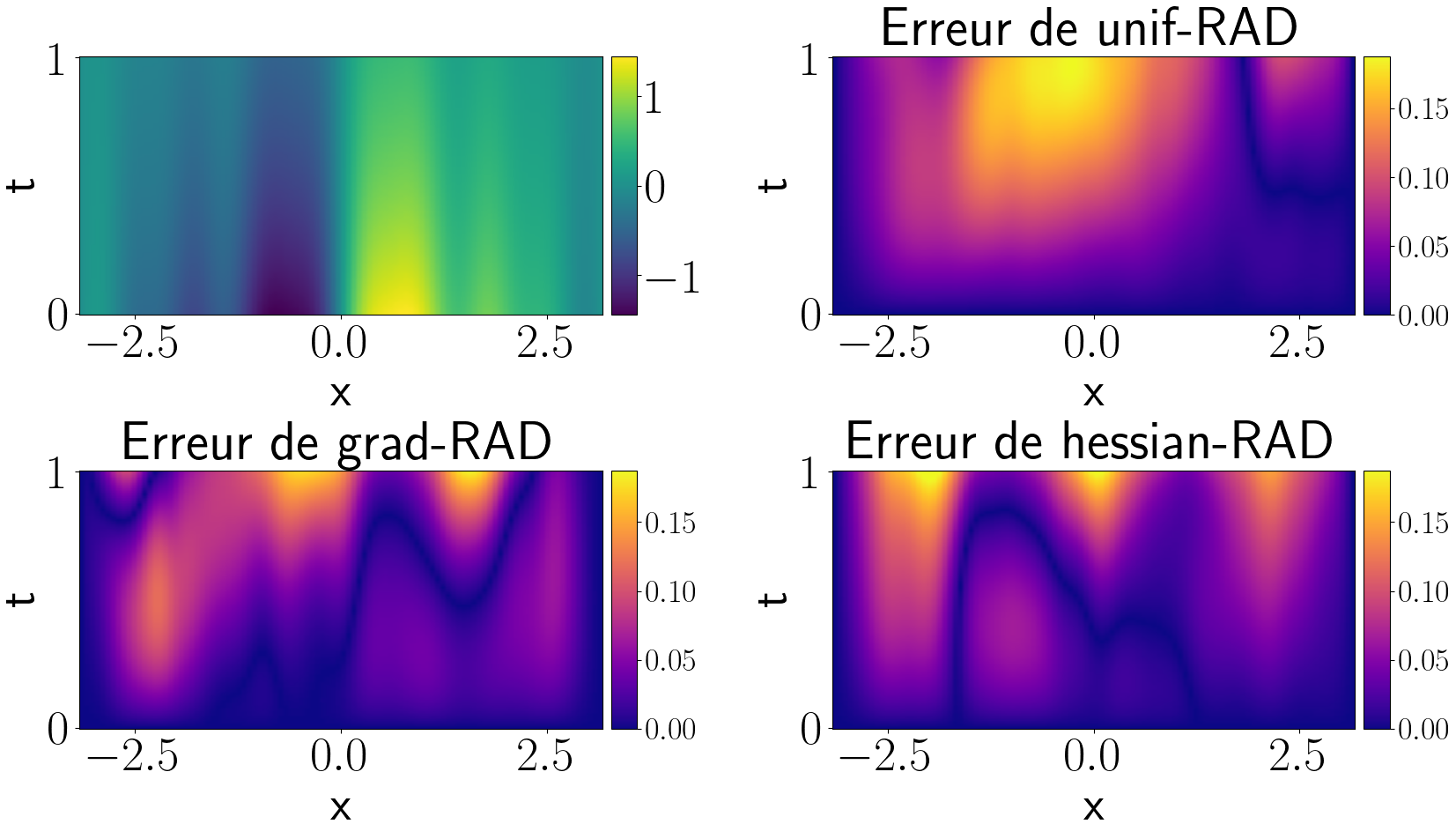}
\caption{(Haut gauche) Solution analytique de l'Žquation de diffusion-rŽaction ; (Autres) Heatmaps des erreurs des mŽthodes basŽes sur les rŽsiduels aprs 100000 itŽrations.}
\label{fig:Diffusion2}
\end{figure}


\section{Conclusion}

Nous avons prŽsentŽ une mŽthode de quadrature basŽe sur les dŽrivŽes secondes. L'exploitation de la hessienne des rŽsiduels montre Žgalement des rŽsultats prometteurs dans une mŽthode d'Žchantillonnage adaptatif pour les PINNs. 
Mais l'utilisation de $f''$ peut devenir cožteuse en haute dimension. Une  direction porte sur une approche stochastique, o ˆ chaque  rŽŽchantillonnage, les ŽlŽments de la hessienne ˆ calculer seraient ŽchantillonnŽs. Une autre direction consiste ˆ s'appuyer sur le fait que des mŽthodes comme gPINN~\cite{gPINN2021} rŽalisent dŽjˆ une grande partie des calculs nŽcessaires pour la hessienne, et peuvent donc tre combinŽes avec peu de cožt supplŽmentaire. 

\medbreak

\noindent \textbf{Remerciements.}
Ce travail a ŽtŽ financŽ par l'ANR sous le projet ``France 2030'', avec la rŽfŽrence EUR MANUTECH SLEIGHT - ANR-17-EURE-0026.

\renewcommand{\refname}{RŽfŽrences}

\delete{
\bibliography{ecml-biblio}

\begin{thebibliography}{10}
\expandafter\ifx\csname fonteauteurs\endcsname\relax
\def\fonteauteurs{\scshape}\fi

\bibitem{Chandrajit2021}
C. Bajaj, L. McLennan, T. Andeen et A. Roy :
\newblock Recipes for when physics fails: recovering robust learning of physics informed neural networks.
\newblock {\em Machine Learning: Science and Technology}, 4(1)\string:\penalty500\relax 015013 (2023).

\bibitem{pmlr-v202-daw23a}
A. Daw, J. Bu, S. Wang, P. Perdikaris et A. Karpatne :
\newblock Mitigating propagation failures in physics-informed neural networks using retain-resample-release ({R}3) sampling.
\newblock {\em PMLR} vol. 202 (2023).

\bibitem{doumeche2023convergence}
N. Doumche, G. Biau et C. Boyer :
\newblock Convergence and error analysis of PINNs.  arXiv\string:2305.01240 (2023).

\bibitem{girault:hal-04518335}
B. Girault, R. Emonet, A. Habrard, J. Patracone et M. Sebban :
\newblock {Approximation Error of Sobolev Regular Functions with tanh Neural Networks: Theoretical Impact on PINNs}.
\newblock {\em {ECML}} (2024).

\bibitem{Hamming}
R.~W.  Hamming :
\newblock Numerical methods for scientists and engineers. 2nd ed.
\newblock International {Series} in {Pure} and {Applied} {Mathematics} (1973).

\bibitem{raissi2019physics}
M. Raissi, P. Perdikaris et G.E  Karniadakis :
\newblock Pinns: A deep learning framework for solving forward and inverse problems involving nonlinear partial differential equations.
\newblock {\em Journal of Computational physics} (2019).


\bibitem{deryck2023error}
T.~De  Ryck, A.D.  Jagtap et S. Mishra :
\newblock Error estimates for physics informed neural networks approximating the navier-stokes equations. {\em IMA Journal of Numerical Analysis}, vol. 44, Issue 1 (2024).


\bibitem{subramanian2023}
S. Subramanian, R.~M.  Kirby, M.W.  Mahoney et A. Gholami :
\newblock Adaptive self-supervision algorithms for physics-informed neural networks.
\newblock {\em ECAI} (2023).

\bibitem{Chenxi2022}
C. Wu, M. Zhu, Q. Tan, Y. Kartha et L. Lu :
\newblock A comprehensive study of non-adaptive and residual-based adaptive sampling for physics-informed neural networks.
\newblock {\em Computer Methods in Applied Mechanics and Engineering},
vol. 403 (2023).

\bibitem{gPINN2021}
J. Yu, L. Lu, X. Meng et G. Karniadakis :
\newblock Gradient-enhanced physics-informed neural networks for forward and inverse {PDE} problems.
\newblock {\em Computer Methods in Applied Mechanics and Engineering},
vol. 393 (2022).

\end{thebibliography}
}

\end{document}